\documentclass[11pt]{article}

\usepackage[a4paper,margin=1in]{geometry}
\usepackage{amsmath,amssymb,amsthm}
\usepackage{booktabs}
\usepackage{graphicx}
\usepackage{lmodern}
\usepackage[T1]{fontenc}
\usepackage{microtype}
\usepackage[numbers]{natbib}
\usepackage{hyperref}
\hypersetup{colorlinks=true,linkcolor=black,citecolor=black,urlcolor=blue}

\newtheorem{proposition}{Proposition}
\newtheorem{definition}{Definition}
\theoremstyle{remark}
\newtheorem*{remark}{Remark}

\newcommand{\TEmarg}{\mathrm{TE}_{\mathrm{marg}}}
\newcommand{\TEcrn}{\mathrm{TE}_{\mathrm{crn}}}
\newcommand{\DE}{\mathrm{DE}}
\newcommand{\ME}{\mathrm{ME}}
\newcommand{\Ex}{\mathbb{E}}
\newcommand{\TV}{\mathrm{TV}}

\newif\ifanon
\ifdefined\anon \anontrue \fi

\title{Causal Attribution for Agentic Decisions:\\
Estimators, Coupling, and a Traceability Specification}
\author{\ifanon Anonymous author(s)\\Submission under double-blind review%
        \else Ajay Pravin Mahale\fi}
\date{\ifanon\relax\else Draft of \today\fi}

\begin{document}
\maketitle

\begin{abstract}
A provider of a high-risk AI system must keep records that make a decision
traceable, and for agentic systems it has not been established what those
records must contain for post-hoc causal attribution to be possible. We give the
estimator framework and then the conditions under which it fails. We separate the
marginal total effect that prior work measures from a common-random-numbers total
effect that isolates a step's own contribution, add the natural direct effect
under a pinned downstream, and check the estimators against hand derivations.
Both estimands then fail, in the same direction. Under the marginal estimand a
causally inert step has the identical total effect to the decisive one on every
run of our planted chain, an algebraic identity and not a coincidence at one
draw. Under common random numbers the decisive step returns exactly zero on the
runs where the executing step flips, about one in ten, while its direct effect
there is $0.25$ and it demonstrably acts; an exact zero does not certify that a
step did nothing, and we put that here rather than in the limitations. We derive
the coupling that keeps the direct effect estimable once contexts diverge, with a
closed form for its degradation, and show that the mediated share on which a
natural ranking is built is not a share under suppression: where the direct and
mediated paths oppose, it exceeds one and ranks a suppressed component above a
pure mediator. We publish the discrepancy experiment's pre-registration rather
than a result, because the live pipeline it requires was not available in the
study window. We contribute the traceability specification such a filing would
need, against a gap the Act's calendar opens: Article 86's right to an
explanation has applied since 2 August 2026, while the Article 12 logging and
Annex IV documentation that could evidence one were deferred to 2 December 2027
by Regulation (EU) 2026/1744.
\end{abstract}

\section{Introduction}
\label{sec:intro}

On 8 July 2026 the Union deferred the obligations this paper is about. Regulation
(EU) 2026/1744 \citep{omnibus2026} moved the application of Chapter III Sections 1
to 3 of the AI Act \citep{aiact2024},
which contain the technical documentation duty in Article 11, the record-keeping
duty in Article 12 and the transparency duty in Article 13, from 2 August 2026 to
2 December 2027 for Annex III high-risk systems. Recital (40) gives the
legislature's reason, and it is worth quoting because it is a statement about
evidence rather than about scheduling: the deferral responds to ``the delayed
availability of standards, common specifications, and alternative guidance and
the delayed establishment of national competent authorities'', which together
``lead to challenges that jeopardise the effective entry into application of those
obligations''.\footnote{Recital (40) is explanatory and non-binding. We cite it as
a recital and never as operative law.} The apparatus for demonstrating conformity
was not ready, on the Union's own record.

This paper is about one part of that apparatus. For an agentic system, a
sequence of model calls, retrievals and tool invocations that together produce a
regulated decision, what would a provider have to record for anyone to establish
afterwards which part of the system caused the outcome? The question is not
rhetorical. A survey of evidence tracing and execution provenance in LLM agents
grades the field's execution-provenance metrics as merely proposed, with no
agreed definitions and no adopted evaluation protocols \citep{survey2026}.

\paragraph{Where this sits relative to the closest work, in paragraph one rather
than in related work.} Causal attribution over agent trajectories already exists.
\citet{car2026} gives a five-operator intervention algebra and a Monte Carlo
Shapley estimator over agent steps, and we adopt its operators verbatim rather
than improving on them, since modifying an instrument one is testing hands a
reviewer a free rejection. Two things that work states about itself define the
space this paper occupies. It measures a total effect under which every
downstream step re-decides under fresh randomness, and it leaves the direct
effect as a refinement, naming common random numbers across divergent contexts as
the obstacle. And it operates on mocked tools, with real tools and their side
effects out of scope.

We take up both. Section~\ref{sec:two-te} shows the choice between the two total
effects is not a variance question. Under the marginal estimand a causally inert
step is numerically indistinguishable from the decisive one, and on our planted
chain that indistinguishability is an identity rather than a near miss. Under
common random numbers the inert steps return exactly zero, but the implication
runs one way only, and Proposition~\ref{prop:crn-degeneracy} gives the
conditions under which a decisive step returns exactly zero as well.
Section~\ref{sec:coupling} turns ``hard across divergent contexts'' into a closed
form, an optimal construction, a measured cost, and three implementation
requirements, one of which is that the replay floor is set by batch invariance in
the serving stack rather than by seed control \citep{he2025}.

\paragraph{Contributions.}
\begin{enumerate}
\item A separation result between two total effects on agent trajectories, with
  closed forms, a proof that the marginal estimand assigns an inert step and a
  decisive one the identical effect on every run of the planted chain, and a
  proof that the common-random-numbers estimand, which does separate them, still
  returns exactly zero for a decisive step on a measurable set of runs, so a zero
  is evidence of nothing on its own. Run against the incumbent locus rule of
  \citet{car2026} on the same chain, the two name different components on $52$
  of $60$ draws, in a direction we characterise rather than score
  (Section~\ref{sec:two-te}).
\item The natural direct effect under a pinned downstream, its decomposition
  identity, and pin plausibility reported per arm rather than pooled, the pooled
  figure being $1/2$ for every fidelity parameter and therefore uninformative
  (Section~\ref{sec:de}).
\item A closed form for shared-randomness coupling agreement, its gap against the
  optimal coupling, and the measured collapse of both under a sampler that sorts
  its vocabulary by probability (Section~\ref{sec:coupling}).
\item The observation that agent attribution has been built without reference to
  mediation methodology, worked through a pre-registered statistic of ours that
  ranked on a quantity that literature says not to compute
  (Section~\ref{sec:suppression}).
\item A pre-registered discrepancy experiment, published in full and not run,
  with its guards and one guard that was removed for being wrong
  (Section~\ref{sec:prereg}).
\item A traceability specification derived from what those estimators require,
  mapped onto the filed documents and onto two Commission instruments that do not
  yet exist (Section~\ref{sec:annexiv} and Appendix~\ref{app:spec}).
\end{enumerate}

\paragraph{One argument at two levels.} \ifanon Prior work asks\else A companion
study by the same author asks\fi{} the same question of circuit-level evidence and
finds that the claim a provider would file is not stable across defensible
analytic choices \citep{mahale2026multiplicity}. That result is about whether the evidence is
reproducible by a second analyst. This paper is about whether the evidence tracks
causation at all, one level of abstraction up, where the object is a trajectory
rather than a forward pass. The two failures are independent: evidence could be
perfectly stable and still track the wrong thing.

\paragraph{What we do not claim.} The empirical question, whether deployed
observability tracks causal effect, is specified and not answered. It needed a
live regulated pipeline that was not available in the study window. We say so
here, in the abstract, and in Section~\ref{sec:prereg}, and we report no rank
correlation in this paper as a finding.

\section{Setting and estimands}
\label{sec:setting}

A trajectory is a sequence of steps $\tau = (v_1,\dots,v_T)$ terminating in a
decision $Y$. Each step is typed: a model call, a tool call, a retrieval, a memory
read or write, or a branch. Step $k$ draws an action $a_k$ from a policy
conditioned on the prefix $a_{<k}$, using exogenous noise $u_k$. We write
$a^{\mathrm{fact}}$ and $u^{\mathrm{fact}}$ for the realised factual run.

Every effect below is a contrast against that run, as it must be: an effect is a
contrast against something that happened. This is what distinguishes the setting
from ordinary sensitivity analysis, and it is why the record a provider keeps
determines what can be estimated at all, which is the subject of
Section~\ref{sec:annexiv}.

\paragraph{Intervention operators.} We adopt the five operators of
\citet{car2026} verbatim: resampling a step's action under the unchanged policy,
forcing an action, forcing an observation, altering the context, and altering the
policy. Resampling is primary throughout. Removal is not used: taking a step out
puts the system off-distribution in the same unprincipled way that mean ablation
does for circuits in single-forward-pass interpretability, and the argument
against it there transfers directly. That instability is not incidental. At the
circuit level, the choice of ablation operator is one of seven analytic axes
across which a filed interpretability claim flips for 73.2\% of specification
pairs \citep{mahale2026multiplicity}; the operator choice we are making here is
the same class of choice, one level up.

\paragraph{The quantity that does not exist upstream.} In a single forward pass
there is no distinction between what a component contributed directly and what it
contributed by changing what a later component did, because there is no later
component that re-decides. In an agentic system there is. A step can have
near-zero direct effect and large total effect purely because it changed what a
subsequent step chose to do. That quantity is the object of
Sections~\ref{sec:de} and \ref{sec:suppression}, and no observability tool we are
aware of records what would be needed to compute it.

\section{Two total effects, and why the distinction is the crux}
\label{sec:two-te}

Fix a step $k$ and resample its action under the unchanged policy. Two estimands
follow, and they are not interchangeable.

\begin{definition}[Marginal total effect]
Intervene at $k$, then let every downstream step re-decide under \emph{fresh}
noise:
\[
  \TEmarg(k) \;=\; \Ex_{a'_k,\,u_{>k}}\!\left[\,Y \mid do(a_k := a'_k)\,\right]
              \;-\; y^{\mathrm{fact}} .
\]
\end{definition}

\begin{definition}[Common-random-numbers total effect]
Intervene at $k$, then let every downstream step re-decide under the
\emph{factual} noise $u_{>k} = u^{\mathrm{fact}}_{>k}$:
\[
  \TEcrn(k) \;=\; \Ex_{a'_k}\!\left[\,Y \mid do(a_k := a'_k),\,
                  U_{>k} = u^{\mathrm{fact}}_{>k}\,\right]
             \;-\; y^{\mathrm{fact}} .
\]
\end{definition}

The second is a counterfactual in Pearl's sense \citep{pearl2009}: same unit,
same exogenous noise, one variable changed. The first marginalises the noise
away and in doing so re-rolls every downstream decision. That is the confound
\citet{car2026} identifies and resolves with a point-of-commitment rule. We
resolve it by construction instead.

The construction is common random numbers, a standard variance-reduction
technique in simulation \citep{lawkelton2000}, and we claim no novelty in the
technique. Its use for rollout-based planning is recent \citep{yadav2026}. What
we claim is narrower and is the content of the next subsection: in this setting
the choice between marginalising the downstream noise and holding it fixed is
not a variance question at all. It decides whether a causally inert step is
distinguishable from a decisive one.

\subsection{A planted structure that separates them}

Consider a four-step chain with fidelity parameter $q$ and direct weight $w$:
\begin{align}
  a_0 &\sim \mathrm{Bern}(1/2) && \text{inert, no path to } y \nonumber\\
  a_1 &\sim \mathrm{Bern}(1/2) && \text{retrieval} \nonumber\\
  a_2 &\sim \mathrm{Bern}(1/2) && \text{inert, no path to } y \label{eq:scm}\\
  a_3 &= \begin{cases} a_1 & u_3 < q\\ 1-a_1 & \text{otherwise}\end{cases}
      && \text{executing tool call} \nonumber\\
  y   &= w\,a_1 + (1-w)\,a_3 . \nonumber
\end{align}
With $q = 0.9$, $w = 0.5$, and a factual run
$a^{\mathrm{fact}} = (0,1,1,1)$, $u_3^{\mathrm{fact}} = 0.770 < q$, so
$y^{\mathrm{fact}} = 1$. Figure~\ref{fig:two-te} and Table~\ref{tab:two-te} give the closed forms, all
derived by hand before any estimator was run.

\begin{figure}[t]
\centering
\includegraphics[width=\textwidth]{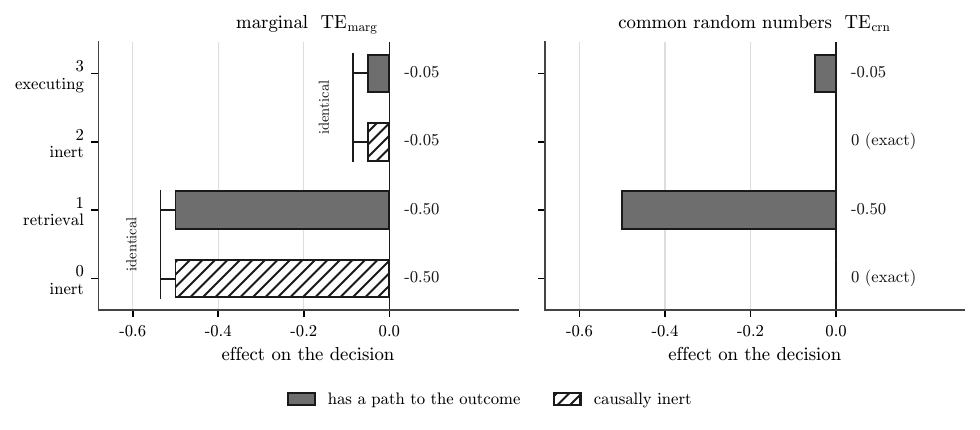}
\caption{The same four steps under the two estimands, at $q=0.9$, $w=0.5$,
\emph{conditional on the factual run stated in the text}
($a^{\mathrm{fact}} = (0,1,1,1)$, $u_3^{\mathrm{fact}} = 0.770 < q$). Hatched
bars are steps with no path to the outcome. Under the marginal estimand (left)
the inert step 0 is numerically identical to the decisive retrieval, and the
inert step 2 to the executing step; Proposition~\ref{prop:marg-identity} shows
both equalities hold on every run, not only this one. Under common random
numbers (right) both inert steps are exactly zero, not small. The converse does
not hold, and this figure cannot show that it does not:
Proposition~\ref{prop:crn-degeneracy} gives the runs, of probability $1-q$, on
which the decisive step 1 also returns exactly zero. Values are the closed forms
derived by hand before any estimator was run.}
\label{fig:two-te}
\end{figure}

\begin{table}[t]
\centering
\begin{tabular}{llrrrr}
\toprule
step & planted role & $\TEmarg$ & $\TEcrn$ & $\DE$ & $\ME$ \\
\midrule
0 & inert, no path to $y$ & $-0.50$ & $0$      & $0$      & $0$ \\
1 & decisive retrieval    & $-0.50$ & $-0.50$  & $-0.25$  & $-0.25$ \\
2 & inert, no path to $y$ & $-0.05$ & $0$      & $0$      & $0$ \\
3 & executing step        & $-0.05$ & $-0.05$  & $-0.05$  & $0$ \\
\bottomrule
\end{tabular}
\caption{Analytic values on the planted chain \eqref{eq:scm}, $q=0.9$, $w=0.5$,
conditional on the factual run $a^{\mathrm{fact}} = (0,1,1,1)$ with
$u_3^{\mathrm{fact}} = 0.770 < q$. Under the marginal estimand the inert step 0
is numerically indistinguishable from the decisive step 1, and the inert step 2
from the executing step 3. Under CRN both inert steps are exactly zero. The
$\TEcrn$ column is the one that depends on which factual run was drawn: had
$u_3^{\mathrm{fact}}$ landed above $q$, step 1 would read $0$ in this column
too (Proposition~\ref{prop:crn-degeneracy}).}
\label{tab:two-te}
\end{table}

The point of Figure~\ref{fig:two-te} is the left panel, and the reason is
stronger than one factual run. Both of its collisions are forced.

\begin{proposition}[Marginal indistinguishability is an identity]
\label{prop:marg-identity}
On the chain \eqref{eq:scm}, for every factual draw,
\[
  \TEmarg(0) \;=\; \TEmarg(1) \;=\; \tfrac{1}{2} - y^{\mathrm{fact}},
\]
\[
  \TEmarg(2) \;=\; \TEmarg(3) \;=\;
    w\,a^{\mathrm{fact}}_1 + (1-w)\bigl(q\,a^{\mathrm{fact}}_1
    + (1-q)(1-a^{\mathrm{fact}}_1)\bigr) - y^{\mathrm{fact}} .
\]
\end{proposition}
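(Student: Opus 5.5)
The plan is to compute each $\TEmarg(k)$ directly from the definition, using the structural equations \eqref{eq:scm} and exploiting two features of the chain. First, $y$ depends on the actions only through $(a_1,a_3)$. Second, under the marginal estimand every variable from index $k$ onward is a fresh draw from its structural mechanism, while every variable before $k$ stays at its factual value. Concretely, $do(a_k := a'_k)$ with $a'_k$ drawn from the unchanged policy, together with fresh $u_{>k}$, means that the whole suffix $(a_k,\dots,a_3)$ is regenerated from the SCM conditional on the factual prefix $a^{\mathrm{fact}}_{<k}$. So $\TEmarg(k) = \Ex[\,y \mid a_{<k} = a^{\mathrm{fact}}_{<k}\,] - y^{\mathrm{fact}}$, with the expectation taken under the model. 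I will state this reduction first, since everything else is bookkeeping.

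Step one treats the pair $k\in\{0,1\}$. In both cases $a_0$ is the only possibly fixed variable, and $y$ does not depend on it. So $a_1$ is a fresh $\mathrm{Bern}(1/2)$, and $a_3$ is a fresh draw given $a_1$. I then compute $\Pr(a_3=1) = \tfrac12 q + \tfrac12(1-q) = \tfrac12$, which gives $\Ex[y] = w\cdot\tfrac12 + (1-w)\cdot\tfrac12 = \tfrac12$ for every $w,q$. Both effects therefore equal $\tfrac12 - y^{\mathrm{fact}}$. The equality holds for every factual draw because the factual values entering the expectation, namely $a^{\mathrm{fact}}_0$ (and nothing else for $k=1$), do not appear in $y$.

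Step two treats $k\in\{2,3\}$. Here $a_1 = a^{\mathrm{fact}}_1$ is held fixed in both cases. For $k=2$, resampling $a_2$ is irrelevant to $y$, and $a_3$ is regenerated with fresh $u_3$. For $k=3$, resampling $a_3$ under the unchanged policy given the factual prefix is exactly a fresh-$u_3$ draw of $a_3$ given $a^{\mathrm{fact}}_1$. In both cases $\Ex[a_3] = q\,a^{\mathrm{fact}}_1 + (1-q)(1-a^{\mathrm{fact}}_1)$, and linearity of $y$ yields the displayed expression. As a sanity check, I will confirm that with $a^{\mathrm{fact}}_1=1$, $q=0.9$, $w=0.5$ this gives $0.95-1=-0.05$, matching Table~\ref{tab:two-te}.

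The main obstacle is not computational but definitional. I must justify that the marginal estimand regenerates the downstream step $a_1$ when intervening at step $0$, even though $a_1$ does not causally depend on $a_0$. The definition says every downstream step re-decides under fresh noise, and the noise of $a_1$ is part of $u_{>0}$. I also have to pin down that resampling $a_3$ at $k=3$ means drawing from the policy conditioned on the factual prefix, so that it coincides with fresh $u_3$. I would make both points explicit, in the form of the reduction above, before carrying out the two calculations.
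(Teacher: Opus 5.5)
Your proposal is correct and follows essentially the same route as the paper's proof. Both hold the factual prefix and regenerate everything from step $k$ on, which gives $\Ex[Y]=\tfrac{1}{2}$ for $k\in\{0,1\}$ because $a_1$ is redrawn, and $\Ex[a_3]=q\,a^{\mathrm{fact}}_1+(1-q)(1-a^{\mathrm{fact}}_1)$ for $k\in\{2,3\}$ because $a_1$ is held; your reduction $\TEmarg(k)=\Ex[\,y\mid a_{<k}=a^{\mathrm{fact}}_{<k}\,]-y^{\mathrm{fact}}$, which is valid here because the exogenous noises are independent, makes explicit what the paper's proof uses implicitly.
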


\begin{proof}
Forcing the inert $a_0$ leaves $a_1$ to redraw, so $\Ex[a_1] = \tfrac{1}{2}$ and
$\Ex[a_3] = q\Ex[a_1] + (1-q)(1-\Ex[a_1]) = \tfrac{1}{2}$, giving
$\Ex[Y] = \tfrac{1}{2}$. Forcing $a_1 := a'_1 \sim \mathrm{Bern}(1/2)$ with $a_0$
held gives $\Ex[Y] = w\Ex[a'_1] + (1-w)\Ex[a_3] = \tfrac{1}{2}$ by the same
computation, for any $w$. For the second pair, $a_1$ is held at
$a^{\mathrm{fact}}_1$; forcing the inert $a_2$ and letting $a_3$ redraw, and
forcing $a_3$ to a fresh policy draw at the same prefix, both leave
$\Ex[a_3] = q\,a^{\mathrm{fact}}_1 + (1-q)(1-a^{\mathrm{fact}}_1)$.
\end{proof}

So the left panel is not an artifact of the run we happened to draw. A causally
inert step and the decisive retrieval carry the \emph{same} marginal total effect
on every run of this chain, and no threshold on magnitude can separate them.
That is the case for a locus rule, and it is why \citet{car2026} needs one.

The right panel needs to be stated more carefully than we first stated it. Zero
under CRN is exactly zero and the inert steps do return it. The converse fails.

\begin{proposition}[An exact zero does not certify inertness]
\label{prop:crn-degeneracy}
On the chain \eqref{eq:scm} with $w = 1/2$, condition on the factual noise
$u_3^{\mathrm{fact}} \ge q$, an event of probability $1-q$. Then
$\TEcrn(1) = 0$ exactly, while
$\DE(1) = w\bigl(\tfrac{1}{2} - a^{\mathrm{fact}}_1\bigr) \ne 0$. For
$w \ne 1/2$ the conditional effect is nonzero whatever $u_3^{\mathrm{fact}}$ is.
\end{proposition}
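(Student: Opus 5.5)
The plan is a direct computation on the chain \eqref{eq:scm}, in the style of the proof of Proposition~\ref{prop:marg-identity}. I would first condition on the event $u_3^{\mathrm{fact}} \ge q$. Its probability is $1-q$ because $u_3$ is uniform on $[0,1]$; this is the convention already implicit in the case split for $a_3$. On this event the factual run has $a_3^{\mathrm{fact}} = 1 - a_1^{\mathrm{fact}}$. Hence
\[
y^{\mathrm{fact}} = w\,a_1^{\mathrm{fact}} + (1-w)\bigl(1-a_1^{\mathrm{fact}}\bigr),
\]
which equals $\tfrac12$ when $w = \tfrac12$, whichever value $a_1^{\mathrm{fact}}$ took.

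Next I would evaluate $\TEcrn(1)$ from its definition. Resample $a'_1 \sim \mathrm{Bern}(1/2)$ and hold $u_3 = u_3^{\mathrm{fact}} \ge q$. Then the executing step deterministically flips, giving $a_3 = 1 - a'_1$. The counterfactual outcome is therefore $w\,a'_1 + (1-w)(1-a'_1)$. At $w=\tfrac12$ this equals $\tfrac12$ pointwise in $a'_1$, not merely in expectation. So the expectation is $\tfrac12$ and the contrast with $y^{\mathrm{fact}}=\tfrac12$ is exactly $0$. The mechanism is worth making explicit: the direct path $w\,a'_1$ and the mediated path $(1-w)(1-a'_1)$ cancel term by term. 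This cancellation is the whole content of the degeneracy.

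For $\DE(1)$ I would use the natural direct effect under a pinned downstream announced for Section~\ref{sec:de}: intervene on $a_1$ while holding $a_3$ at $a_3^{\mathrm{fact}}$. This gives
\[
\Ex_{a'_1}\bigl[w\,a'_1 + (1-w)\,a_3^{\mathrm{fact}}\bigr] - y^{\mathrm{fact}} = w\bigl(\tfrac12 - a_1^{\mathrm{fact}}\bigr),
\]
because the $a_3^{\mathrm{fact}}$ terms cancel against $y^{\mathrm{fact}}$. This expression does not depend on $u_3$. As a check, it reproduces the $-0.25$ entry of Table~\ref{tab:two-te}. It is nonzero because $a_1^{\mathrm{fact}} \in \{0,1\}$ and $w>0$.

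Finally, the case $w \ne \tfrac12$. I would compute the counterfactual mean in both regimes of $u_3^{\mathrm{fact}}$:
\begin{itemize}
\item For $u_3^{\mathrm{fact}} < q$, we get $a_3 = a'_1$, so the mean is $\Ex[a'_1] = \tfrac12$. Meanwhile $y^{\mathrm{fact}} = a_1^{\mathrm{fact}} \in\{0,1\}$, so the difference is $\pm\tfrac12 \ne 0$.
\item For $u_3^{\mathrm{fact}} \ge q$, the mean is again $w/2 + (1-w)/2 = \tfrac12$. Now $y^{\mathrm{fact}} \in \{w, 1-w\}$, neither of which equals $\tfrac12$.
\end{itemize}
So $\TEcrn(1)\neq 0$ in both regimes. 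The main obstacle is not algebraic. It is that $\DE$ is formally defined only later, so I would state explicitly the pinned-downstream definition being used and check it against Table~\ref{tab:two-te}.
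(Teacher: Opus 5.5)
Your proposal is correct and follows essentially the paper's own argument: reusing $u_3^{\mathrm{fact}} \ge q$ forces $a_3 = 1 - a'_1$, so at $w = 1/2$ the direct and mediated terms cancel pointwise in $a'_1$ and agree with $y^{\mathrm{fact}} = 1/2$, while pinning $a_3 = a_3^{\mathrm{fact}}$ leaves the $w\,a'_1$ term free to move. The only difference is one of presentation. You compute $\DE(1)$ and the $w \ne 1/2$ case as explicit values, and you cover the $u_3^{\mathrm{fact}} < q$ regime, which the paper's proof leaves implicit; the paper instead argues from whether $y$ is constant in $a'_1$.
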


\begin{proof}
Under CRN the executing step reuses $u_3^{\mathrm{fact}}$, so
$u_3^{\mathrm{fact}} \ge q$ forces $a_3 = 1 - a'_1$ for every forced $a'_1$, and
$y = w a'_1 + (1-w)(1-a'_1) = (1-w) + (2w-1)a'_1$, constant in $a'_1$ exactly
when $w = 1/2$. The factual run is one such branch, so $y^{\mathrm{fact}} = 1/2$
as well and the difference vanishes identically. The direct-effect arm instead
pins $a_3$ at $a^{\mathrm{fact}}_3$, leaving $y = w a'_1 + (1-w)a^{\mathrm{fact}}_3$,
which is not constant in $a'_1$ for any $w > 0$.
\end{proof}

Read plainly: on this chain the decisive retrieval returns a CRN total effect of
exactly zero on about one run in ten, and on those runs it is numerically
indistinguishable from the two steps with no path to the outcome at all, even
though its direct effect there is $0.25$ and it demonstrably acts. On those same
runs $y^{\mathrm{fact}} = 1/2$ sits at the interventional mean, so the marginal
effect is zero too and neither estimand in this paper separates the step. The
failure is not sampling noise that more rollouts would remove. It is exact
cancellation between a direct path of weight $w$ and a mediated path of weight
$1-w$ that the factual noise has inverted.

\texttt{scripts/validate\_crn\_degeneracy.py} measures this against the committed
estimator rather than against the derivation. Over $20\,000$ factual draws the
event $u_3^{\mathrm{fact}} \ge q$ occurs at rate $0.0974$ against a nominal
$0.10$; on the $300$-seed subset the estimator is run on it occurs $18$ times
(two-sided binomial $p = 0.02$ against $0.10$, a property of that seed set that
we report rather than reseed away), and on every one of those $18$ runs
$|\TEcrn(1)| < 10^{-12}$ while $|\DE(1)| = 0.254 \pm 0.014$. Sweeping $w$ over
$\{0.2, 0.3, 0.4, 0.5, 0.6, 0.8\}$ at $60$ draws each, the exact-zero rate is
$0.133$ at $w = 1/2$ and exactly $0$ at every other value.

We keep $w = 1/2$ as the reported configuration \emph{because} it is the worst
case for our own estimand, not despite it. Moving $w$ to a value at which the
cancellation cannot occur would be the repair our pre-registration forbids
elsewhere, and it would conceal the one condition under which a CRN zero must
not be read as inertness: a step whose direct and mediated paths carry equal
weight and oppose. The claim the rest of this paper rests on is therefore
one-directional. A causally inert step returns exactly zero under CRN on every
run. A step that returns exactly zero under CRN is not thereby inert, and a
provider reading a zero off a single trajectory has no warrant to conclude that
it is. Requirement R5 in Appendix~\ref{app:spec} exists for this reason: the
record has to carry the factual noise, so that the reader of an attribution can
tell which of the two cases produced the zero.

\subsection{Against the incumbent instrument}
\label{sec:vs-car}

Contribution~1 claims a separation from the locus rule of \citet{car2026}, and
until now nothing in this paper measured it. The rule is reproduced verbatim in
our code, as the latest step whose total-effect interval still excludes zero, and
\texttt{scripts/validate\_triage\_closeout.py} runs it against our estimand on the
chain \eqref{eq:scm} over $60$ factual draws. It returns the executing step~3 on
$60$ of $60$. The largest $|\TEcrn|$ returns the retrieval step~1 on $52$ of
$60$. The two disagree on $52$ of the $60$ draws.

We do not read that as the incumbent being wrong, and the paper would be weaker
if we did. The two rules answer different questions. A point-of-commitment rule
asks where the outcome became inevitable, and on this chain that is the tool call
that wrote the result. A common-random-numbers total effect asks which step's own
action carried the difference, and that is the retrieval whose output the tool
call executed. Both answers are defensible and they are not the same component. A
provider filing an attribution therefore has to say which question it answered,
because on a structure this simple the two disagree on seven runs in eight, and
nothing in any record the Regulation currently asks for would tell a reader which
was meant.

\paragraph{Resampling cannot attribute to a step that has stopped deciding.} The
intervention throughout is \texttt{do\_resample}: redraw the action at $k$ from
the unchanged policy. Where that policy has collapsed to a point mass, the redraw
returns the factual action every time and every effect at that step is exactly
zero, however decisive the step is. Setting $q = 1$ in \eqref{eq:scm} makes the
executing step a deterministic function of the retrieval, and on the committed
estimator its change rate is then exactly $0$ with $\TEmarg$, $\TEcrn$ and $\DE$
all exactly $0$, while the retrieval is untouched at $0.505$. This is
Proposition~\ref{prop:crn-degeneracy} seen from the other side and it has the
same remedy. The change rate is already computed for every step; a zero effect
has to be read next to it. A zero effect at a zero change rate is not a
measurement of no influence, it is the absence of a measurement.

\begin{remark}
This is a statement about estimators on a known structure. It is not a statement
about any deployed system, and we do not write it as one.
\end{remark}

\section{The direct effect and the decomposition}
\label{sec:de}

\begin{definition}[Natural direct effect, pinned downstream]
Intervene at $k$ and pin every $j > k$ to the action it took in the factual run:
\[
  \DE(k) \;=\; \Ex_{a'_k}\!\left[\,Y \mid do(a_k := a'_k),\,
               a_{>k} := a^{\mathrm{fact}}_{>k}\,\right]
          \;-\; y^{\mathrm{fact}} .
\]
\end{definition}

This is the natural direct effect of \citet{pearl2001}, with the textbook
treatment in \citet[\S4.5.4--4.5.5]{pearl2009}; the concept predates the
counterfactual formalisation and priority belongs to \citet{robins1992}.

The mediated effect is the residual,
\begin{equation}
  \ME(k) \;=\; \TEcrn(k) - \DE(k),
  \label{eq:identity}
\end{equation}
which is an identity rather than an approximation, and we say so because it is
worth being clear about what an identity can and cannot be evidence for. The
implementation computes $\ME$ as $\TEcrn - \DE$. Checking that
$\ME - (\TEcrn - \DE)$ is $5.55 \times 10^{-17}$, as an earlier version of
this paper reported, therefore measures the associativity of IEEE 754
subtraction and nothing about the estimators. We have removed that number. The
check that carries content is against the hand derivation, not against the code:
on the planted chain the mediated share of step 1 is
$|\ME|/|\TEcrn| = 0.25/0.5 = 0.5$, which must equal $1-w$ because the mediated
path carries exactly weight $1-w$, and it does; and each of $\TEmarg$,
$\TEcrn$ and $\DE$ separately matches its closed form at every step to within
Monte Carlo error, which is the comparison that could have failed.

\subsection{Pin plausibility, reported rather than assumed away}

The direct-effect arm at step 1 pins $a_3 = 1$ while $a_1$ has been resampled.
When $a'_1 = 1$ the pin agrees with the policy with probability $q$; when
$a'_1 = 0$ it agrees with probability $1-q$. Averaging over
$a'_1 \sim \mathrm{Bern}(1/2)$ gives $(q + 1 - q)/2 = 1/2$.

That average is $1/2$ for \emph{every} $q$, which is the whole objection to
reporting it. A quantity whose value does not move when the fidelity parameter
moves from $0.6$ to $0.99$ is not measuring the pin. An earlier version of this
paper reported the pooled figure as though it were informative; it is not, and
the conditional figures are. Measured on the committed estimator, the pooled
value is $0.483$ at $q = 0.9$, against $0.899$ on the arm where the intervention
left the action unchanged and $0.097$ on the arm where it changed it; at
$q = 0.99$ the same three numbers are $0.482$, $0.990$ and $0.010$.

The changed arm is the one the direct effect depends on, because it is the arm
in which the pin is doing work, and there the pinned continuation is one the
policy would have produced with probability $1-q$. At $q = 0.9$ that is one time
in ten. We therefore report pin plausibility per arm, and where the changed-arm
figure is small the direct-effect estimate rests on a continuation the policy
would rarely have produced, so $\ME$ must be read as bounded rather than as a
point estimate. Both arms are reported alongside every $\ME$.

\section{Coupling across divergent contexts}
\label{sec:coupling}

\citet[\S7]{car2026} states that a direct-effect arm needs common random numbers
across branches, calls this hard once contexts diverge, and leaves it as a
refinement. We give the closed form for what ``hard'' costs.

Token sampling is inverse-transform sampling from a categorical distribution. If
the uniform draw is a pure function of $(\text{run id}, \text{step},
\text{replicate})$ and never of the context, factual and counterfactual branches
consume identical draws. The noise is shared exactly. What is not shared exactly
is the benefit.

\subsection{Shared-\texorpdfstring{$u$}{u} coupling does not attain the bound}

A first implementation asserted that shared-$u$ inverse-transform sampling in
fixed index order attains the maximal-coupling bound $1 - \TV(p,q)$. That is
false, and the validation script rejected it at up to 215 Monte Carlo standard
errors before any of it reached this manuscript. Two branches agree at token $i$
exactly when $u$ lands in the intersection of the CDF intervals
$[P_{i-1},P_i)$ and $[Q_{i-1},Q_i)$, so
\begin{equation}
  A_{\mathrm{quant}}(p,q) \;=\;
    \sum_i \max\!\bigl(0,\; \min(P_i,Q_i) - \max(P_{i-1},Q_{i-1})\bigr)
  \;\le\; \sum_i \min(p_i,q_i) \;=\; 1 - \TV(p,q),
  \label{eq:quant}
\end{equation}
with equality only in degenerate cases. The mechanism behind the loss is that
mass moved between two tokens shifts every subsequent CDF boundary, so a
perturbation early in the index order decouples the entire tail.

Maximal coupling attains $1 - \TV(p,q)$, and that this is optimal is classical
\citep{levin2017}; we cite it rather than claim it. What we have not found
stated for this purpose, though it is an elementary computation, is the closed
form \eqref{eq:quant} for the shared-$u$ alternative and the size of the gap
between them at realistic divergence. Maximal coupling requires both $p$
and $q$ at draw time, so the factual branch's distribution must be carried
alongside the counterfactual one. That is one extra forward pass per step, and
it is available here because the factual run is being replayed anyway.

\begin{figure}[t]
\centering
\includegraphics[width=0.72\textwidth]{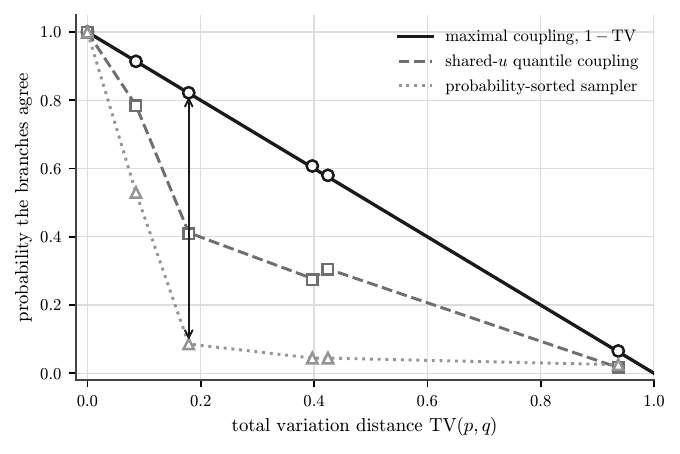}
\caption{Agreement probability against divergence, \emph{vocabulary 32},
40{,}000 draws per point. Lines are closed forms and markers are measured; every
measurement falls within 1.9 Monte Carlo standard errors of its closed form, and
the full table is in Appendix~\ref{app:coupling-table}. The arrow marks the cost
of the sampler's index order: at $\mathrm{TV}=0.18$ maximal coupling agrees
$82\%$ of the time and a sampler that sorts its vocabulary by probability agrees
$8.6\%$ of the time. Read the $82\%$ as general and everything else on this
figure as specific to $V=32$ and to this perturbation model: only maximal
coupling's $1-\mathrm{TV}$ is invariant, and the text below gives the range the
other two cover.}
\label{fig:coupling}
\end{figure}

Figure~\ref{fig:coupling} makes two design requirements binding and corrects a
third that an earlier version of this section stated too strongly.

\paragraph{Use maximal coupling, not quantile coupling.} At $\TV = 0.179$ and
$V = 32$ the gap is $0.821$ against $0.412$. This is the requirement that
survives everything below, and it survives because maximal coupling's agreement
is $1 - \TV$ by construction: it does not depend on the vocabulary, on the source
entropy, or on how the divergence was produced. Held at $\TV = 0.18$ across
vocabularies from $32$ to $10^5$ and three source entropies, it is $0.820$ in
every one of twelve cells, while quantile agreement runs from $0.631$ down to
$0.0044$, a factor of $142$, and probability-sorted agreement from $0.774$ down
to $0.0000$. At a production vocabulary the case for the extra forward pass is
therefore far stronger than this figure at $V = 32$ makes it look.

\paragraph{Keep the uniform draw a pure function of its key.} An advancing
generator makes the draw depend on call order, and call order differs between
branches.

\paragraph{The correction.} An earlier version said to fix the vocabulary order,
on the ground that sorting by probability collapses agreement, quoting $0.086$
against $0.412$. That inequality is not general and we withdraw the reasoning
behind the requirement while keeping a weaker version of the requirement itself.
When both branches are peaked they largely agree on which tokens are large, their
two sorted orders nearly coincide, and sorting \emph{raises} agreement: at
$V = 32$ with a peaked source it gives $0.774$ against $0.631$ for a fixed index
order, a reversal that survives a Monte Carlo run through the sampler itself.
What is wrong with sorting is not that it is worse. It is that the ordering
becomes a function of the branch's own distribution, so the coupling's cost is
set by the divergence rather than chosen by the implementer. The requirement that
holds is that the order be shared and branch-independent, or that maximal
coupling be used, which needs no order at all.

\paragraph{Two cautions on the numbers.} They are quoted at $V = 32$ and they do
not survive a change of vocabulary. They do not survive a change of perturbation
model either, at fixed vocabulary and fixed divergence: forming $q$ by adding
Gaussian logit noise of a stated scale gives quantile $0.412$ and sorted $0.086$
at $\TV = 0.179$, while bisecting a scale on a fixed noise vector to reach
$\TV = 0.180$ gives $0.595$ and $0.321$, a factor of $3.7$ on the second. The
result of this section is the ordering and the invariance of $1 - \TV$. The
magnitudes are illustrations of it and we do not offer them as estimates of
anything a deployed sampler would show.

\subsection{What replay actually requires}

The construction above assumes the underlying forward pass is reproducible. It
is not reproducible by default. \citet{he2025} establish that the forward pass
is already run-to-run deterministic and that the usual explanation, concurrency
racing with floating-point non-associativity, is not the cause; the cause is
lack of batch invariance under varying server load. Their measurement is that
1000 completions at temperature zero from one prompt gave 80 unique
completions, identical for the first 102 tokens and diverging at the 103rd,
collapsing to one completion under batch-invariant kernels.

Three consequences bind on any replay harness. Temperature zero with a fixed
seed is not sufficient, and a harness that assumes otherwise will silently
mis-measure its own replay floor. The remedy is batch-invariant kernels rather
than serialisation. And determinism does not survive a change of GPU or
framework version, so a replay is valid only within one pinned inference stack,
which is the same discipline we apply to the analysis environment.

Where batch invariance cannot be obtained, $\ME$ inherits a noise floor that is
a property of the serving infrastructure rather than of the system under study,
and no effect smaller than the measured null-replay floor may be reported as an
effect.

\begin{remark}
This is a result about samplers, established on synthetic categorical
distributions. It has not been run against a language model. The only quantity
in it that is model-independent is maximal coupling's $1 - \TV$. Every other
number here is as much a property of the vocabulary size, the source entropy and
the perturbation model as it is of the coupling.
\end{remark}

\section{Inconsistent mediation, and a pre-registered statistic built on it}
\label{sec:suppression}

A natural way to operationalise ``how much of this step's influence travelled
through what it caused later steps to do'' is the ratio $|\ME| / |\TEcrn|$. We
pre-registered a rank statistic on exactly that quantity. It is the wrong
quantity, and the mediation literature has said so for thirty years.

\subsection{The result is not ours}

In mediation analysis the ratio of the indirect effect to the total effect is
the \emph{proportion mediated}, and the configuration in which the direct and
indirect effects carry opposite signs is called \emph{inconsistent mediation},
with the mediator acting as a suppressor \citep{mackinnon2000,kenny_mediation}.
That the proportion mediated is unstable when the total effect is small, and can
exceed one or turn negative, is established in \citet{mackinnon1995}. We record
plainly that we did not derive this independently: we pre-registered a statistic
on the quantity, found the pathology on a constructed example, and located the
prior result on checking.

Restated in the notation of this paper, with no claim of novelty:

\begin{proposition}[Restatement of the proportion-mediated instability]
\label{prop:supp}
Let $\ME = \TEcrn - \DE$. If $\operatorname{sign}(\DE) \neq
\operatorname{sign}(\TEcrn)$, then
\[
  \frac{|\ME|}{|\TEcrn|} \;=\; \frac{|\TEcrn - \DE|}{|\TEcrn|}
  \;=\; 1 + \frac{|\DE|}{|\TEcrn|} \;>\; 1 ,
\]
unbounded above as $|\DE|$ grows in the opposing direction. A pure mediator has
$\DE = 0$ and attains exactly $1$, so a suppressed step outranks it.
\end{proposition}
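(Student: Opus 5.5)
The argument is elementary and runs entirely on the identity \eqref{eq:identity}, $\ME = \TEcrn - \DE$, taken as the definition. We assume $\TEcrn \neq 0$ so that the ratio is defined. This is the only genuine hypothesis to flag, and Proposition~\ref{prop:crn-degeneracy} shows it can fail exactly. We also read $\operatorname{sign}(\DE) \neq \operatorname{sign}(\TEcrn)$ as strict opposition, $\DE \cdot \TEcrn < 0$. If $\DE = 0$ were allowed as a ``different sign'' from a nonzero $\TEcrn$, the strict inequality would degrade to equality, and that degenerate case is precisely the pure-mediator comparison handled last.

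\textbf{Step 1: remove the absolute values.} Write $t = \TEcrn$ and $d = \DE$ with $td < 0$, and let $\sigma = \operatorname{sign}(t)$, so that $d = -\sigma|d|$. Then $t - d = \sigma(|t| + |d|)$. Both terms point the same way, so $|t - d| = |t| + |d|$ with no cancellation.

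\textbf{Step 2: divide.} Dividing by $|t| > 0$ gives $|\ME|/|\TEcrn| = 1 + |d|/|t|$. This is strictly greater than $1$ because $d \neq 0$ under strict opposition.

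\textbf{Step 3: unboundedness.} Fix $t$ and let $|d| \to \infty$ with the sign of $d$ held opposite to $t$. The ratio grows linearly in $|d|$ without bound. Equivalently, fix $d$ and let $|t| \to 0^+$, which is the small-total-effect instability of \citet{mackinnon1995}.

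\textbf{Step 4: the pure mediator.} Set $\DE = 0$ with $\TEcrn \neq 0$. Then $\ME = \TEcrn$ and the ratio is exactly $1$. Any suppressed step therefore has a strictly larger ratio, so a ranking by $|\ME|/|\TEcrn|$ places the suppressed step above the pure mediator.

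There is no substantive obstacle. The only point needing care is the edge-case bookkeeping in the opening paragraph. Step 1 is the crux: it is the one place where opposite signs are used, turning a difference of magnitudes into a sum. If the signs agreed, $|t - d| = \bigl||t| - |d|\bigr|$ instead, and the ratio would no longer be forced above $1$.
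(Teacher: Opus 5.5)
Your proof is correct and follows the paper's own argument: opposite signs turn $|\TEcrn - \DE|$ into $|\TEcrn| + |\DE|$, and dividing by $|\TEcrn|$ gives the result. Your explicit hypotheses, $\TEcrn \neq 0$ and strict opposition $\DE \cdot \TEcrn < 0$, tighten a point the paper's two-line proof leaves implicit: under the literal hypothesis $\DE = 0$ gives equality rather than $>1$, and $\TEcrn = 0$, the exact-zero case of Proposition~\ref{prop:crn-degeneracy}, leaves the ratio undefined.
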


\begin{proof}
$\ME$ is a difference, not a non-negative part, so opposite signs give
$|\TEcrn - \DE| = |\TEcrn| + |\DE|$. Divide by $|\TEcrn|$.
\end{proof}

\subsection{What is ours: the instrument, not the inequality}

Three things follow that the mediation literature had no occasion to state.

\paragraph{A pre-registered statistic ranked on a quantity that should not be
computed.} Our hypothesis H4 is a rank correlation on the mediated share.
Proposition~\ref{prop:supp} says that ranking places suppression above pure
mediation and mixes two opposite mechanisms into one score. The defect was in a
locked specification and was caught before any live data. We report it here
rather than in a footnote because the same construction is available to anyone
building attribution on a mediation decomposition, and the agent-attribution
literature has so far been built without reference to this methodology.

\paragraph{Inconsistent mediation has a concrete operational reading here.} In
an agent pipeline it is a retrieval that directly supports approval while
causing a later verification step to raise a flag. That is not a pathological
corner: it is a description of a component doing two opposing things, which is
common in a pipeline with a checking stage. Every structural model we used
before constructing the case had both paths pushing the same way, which is why
it had never been exercised.

\paragraph{The response is exclusion with a reported rate, not repair.} A share
is formed only where the decomposition splits a common-signed effect,
\[
  \operatorname{sign}(\DE) = \operatorname{sign}(\TEcrn)
  \quad\text{and}\quad
  |\DE| \le |\TEcrn| ,
\]
and is otherwise undefined and flagged. The rank statistic returns its exclusion
count alongside its value, so it cannot be reported without its exclusion rate.
Clamping $4/3$ to $1$ would repair an item that failed a validity check, which
our pre-registration forbids, and would merge suppression into pure mediation.
Steps with $|\TEcrn|$ below a pre-registered floor are undefined too, since a
share of no effect is undefined, but are not counted as suppression, so the
suppression rate is not inflated by inert steps.

\subsection{A worked instance}

Take the chain \eqref{eq:scm} with one sign flipped, $y = w\,a_1 - (1-w)\,a_3$,
at $w = 0.2$, $q = 0.9$. The same factual run gives
$y^{\mathrm{fact}} = -0.6$. Holding $u_3$ factual makes $a_3 = a'_1$, so
$y = (2w-1)a'_1$ and $\TEcrn(1) = +0.30$. Pinning $a_3 = 1$ gives
$\Ex[y] = -0.7$, so $\DE(1) = -0.10$, $\ME(1) = +0.40$ and the ratio is exactly
$4/3$. Measured over rollouts: $+0.3003$, $-0.1001$, $+0.4004$, ratio $1.3333$,
each within Monte Carlo error of the hand-derived value beside it. The
estimators are correct; the quantity built on them is not.

\begin{remark}[A previously computed value moved]
Applying the rule changed a rank correlation computed during pipeline
development from $\tau_b = +0.698$ to $+0.855$, with $80$ of $240$ steps
excluded, all inert and none suppressed. The earlier code assigned inert steps a
share of $0.0$, asserting their influence is entirely direct when they have no
influence at all. Those runs are synthetic smoke tests on planted generators
rather than results, and we report no rank correlation as a finding in this
paper. The rule follows from Proposition~\ref{prop:supp} and was not selected by
its effect on this number. We report the effect because a rule whose consequence
is withheld is worse than one whose consequence is awkward.
\end{remark}

\paragraph{H4 as pre-registered does not carry this guard, and we are not
repairing it silently.} The locked specification in
Section~\ref{sec:prereg} states H4 as a rank correlation on the per-step
mediated share with no exclusion rule attached, because it was written before
Proposition~\ref{prop:supp} was noticed. H4 has to be read with the rule above:
the share is undefined at suppressed and at inert steps, those steps are
dropped, and the drop rate is reported with the statistic. That is an
\emph{amendment to a locked pre-registration}. It is dated in
\texttt{RESEARCH\_LOG.md}, it was made before any live data exists rather than
after seeing a result, and it is recorded as an amendment rather than by editing
the pre-registration in place, because a pre-registration that can be edited is
not one. H4 was and remains exploratory. The primary contrast is H2 and it is
untouched.

\section{The pre-registered experiment, published and not run}
\label{sec:prereg}

The estimators above were built to answer one question: does the observability
record a provider would file recover the components that causally determined a
decision? That experiment is specified in full, committed with a timestamp, and
not run here. The live regulated pipeline it requires was not available in the
study window. We publish the specification rather than substitute a synthetic
proxy for the answer.

\subsection{Hypotheses}

Let $\tau_b$ denote Kendall's rank correlation with tie correction, computed per
decision and bootstrapped over decisions.

\begin{description}
\item[H1.] For each observability attributor, $\tau_b$ between its ranking and
  the causal ranking by $\TEcrn$ is low. Directional prediction: the bootstrap
  interval over decisions excludes $0.5$ from below.
\item[H2.] After conditioning on causal rank, attributor rankings retain
  systematic dependence on recency and verbosity. Directional prediction: the
  joint test of $(\beta_{\mathrm{rec}}, \beta_{\mathrm{verb}}) = (0,0)$ rejects.
\item[H3.] A non-trivial fraction of decisions contain a causally dominant step
  that the trace ranks negligible. Directional prediction: the fraction is
  positive with a Wilson interval excluding zero.
\item[H4.] The discrepancy concentrates in steps with high mediated and low
  direct effect. Directional prediction: the rank correlation between per-step
  mediated share and the observed-minus-causal rank gap is positive.
\end{description}

\paragraph{One primary contrast, and it is H2.} The pre-registered primary is the
joint two-degree-of-freedom Wald test of
$(\beta_{\mathrm{rec}}, \beta_{\mathrm{verb}}) = (0,0)$ in a Plackett-Luce model
conditioning on causal rank, pooled across attributors with the decision as the
clustering unit. Joint rather than two marginal tests, so it carries no
multiplicity correction. \textbf{H1, H3, H4 and every per-attributor breakdown
are exploratory} and are labelled as such wherever they appear.

\subsection{Specification}

\paragraph{Primary model.} Plackett-Luce rank-ordered, decision as the unit, with
a cluster-robust sandwich. Rank-ordered rather than ordinary least squares on
ranks because the latter is not a model of the ranking process, and because a
judge attributor yields only an ordering. The calibration ratio, mean standard
error over empirical standard deviation, is reported alongside.

\paragraph{Secondary, pre-specified.} CR1 cluster-robust least squares on the
attributor score, reported for comparability. If primary and secondary disagree,
that disagreement is stated in the abstract rather than in a robustness section.

\paragraph{Reporting convention.} Coefficients are reported as
$g = \beta / \lVert \beta \rVert_2$. Plackett-Luce is a random-utility model whose
noise scale is a normalisation rather than a measured fact, so a fit recovers
$c\beta$ for an unidentified $c > 0$ and levels are not reportable. Any ratio of
two coefficients cancels $c$ exactly, which is the right instinct, but anchoring
on the causal coefficient is not: H1 and H2 both predict that coefficient is
small, so the convention would become a ratio-of-normals problem
\citep{fieller1954} precisely when the finding is true. Dividing by the vector's
own norm is invariant to the same $c$, bounded in $[-1,1]$, and undefined only
if no attributor carries any signal, which is reported as itself. The interval
reported is the decision-level bootstrap percentile interval and not a
delta-method standard error, and the reason is a property of the map rather than
a preference. Differentiating $g_j = \beta_j / \lVert \beta \rVert_2$ gives
$\partial g_j / \partial \beta_k = (\delta_{jk} - g_j g_k) / \lVert \beta \rVert_2$,
so, using $\lVert g \rVert_2 = 1$,
\[
  \bigl\lVert \nabla_{\beta}\, g_j \bigr\rVert_2
  \;=\; \frac{\sqrt{1 - g_j^2}}{\lVert \beta \rVert_2} \,,
\]
which vanishes as $|g_j| \to 1$. The linear term degenerates exactly where the
constraint $|g_j| \le 1$ binds, and a symmetric interval built on it is not
confined to the parameter space it is an interval for. A percentile interval
inherits the constraint from the statistic and needs no repair.

\paragraph{Separation guard.} Quasi-complete separation is detected on the
unpenalised fit by two structural signals, neither of which scales with sample
size: $\max_j |\beta_j| > 25$, or a Hessian condition number above $10^{10}$. A
rule based on $|z| > 40$ was written and then removed, because $z$ grows like
$\sqrt{N}$ and a fixed cut fires on strong, well-identified effects: on a
reference design where the model recovers its own generative process and no
separation exists, $z$ reaches $19.5, 28.0, 40.8, 59.5$ and $86.0$ at $100, 200,
400, 800$ and $1600$ decisions. With a corpus in the hundreds it would have
flagged a genuine rejection as separation, and more readily the stronger the
result. \textbf{If separation is detected the primary contrast is reported as
indeterminate}, not refitted and reported as significant; a ridge-penalised fit
is shown as a bounded descriptive estimate with no $p$-value claimed, since on a
separated design the penalised fit still returns $p = 0$. The penalty is fixed in
advance and never tuned to a result.

\paragraph{Pre-fit disclosure.} Before fitting, the maximum absolute correlation
between each ranked attributor score and each covariate is reported. That
correlation is what produces separation, so reporting it lets a reader see the
risk before seeing the test.

\paragraph{Grids, not cells.} H3 is reported across the full grid of its two
thresholds rather than at one setting, because a single cell is a tuned number.

\subsection{Corpus and exclusion, as rules rather than numbers}

Two quantities are specified as decision procedures whose outputs are fixed
mechanically by a measurement that has not been taken, rather than as numbers we
would otherwise be choosing after the fact.

The corpus size is the smallest $n$ satisfying a minimum-detectable-effect
condition on the standardised scale, keyed to the within-decision residual
standard deviation measured in a null replay. The exclusion rule is that
decisions whose null-replay action-match rate falls below the measured replay
floor are excluded and the exclusion rate reported; and no effect smaller than
that floor is reported as an effect at all.

Both rules remain unresolved. That is the intended state: a rule removes the
researcher degree of freedom without pretending to a number nobody has measured,
and a later study resolves them by applying the rule rather than by judgment.

\subsection{Attributors, and one that was cut}

The observability attributors are span duration, output token count, and a
terminal-action indicator. Each is deterministic code traced to a documented
practice in a shipping tool. A recency attributor was specified in the original
design and \textbf{cut}: no shipping tool was found that ranks trace steps by
recency, so retaining it would have been the strawman the provenance audit
existed to prevent. Recency survives only as a covariate in the H2 regression,
where it is a property of presentation rather than a claimed practice.

\subsection{Why this is published rather than held}

A pre-registration that is never followed by data is usually a failure. We think
publishing this one is not, for three reasons. It states in advance what would
count as the discrepancy existing and what would count as it not existing, at a
level of detail that lets someone else run it. It records two guards that were
built, tested and in one case removed for being wrong, which is information a
later investigator would otherwise have to rediscover. And the estimators it
depends on are validated here against closed forms, so what remains unproven is
the empirical claim alone and not the machinery underneath it.

The claim we do not make is the one the experiment was designed to test. It is
stated as untested in the abstract, and nothing in this paper should be read as
evidence for or against it.

\section{What the Act requires, and the record it does not ask for}
\label{sec:annexiv}

Regulation (EU) 2024/1689 obliges a provider of a high-risk system to keep
technical documentation (Article 11 and Annex IV) and to design the system for
automatic logging over its lifetime (Article 12). We read both against the
question this paper asks, and the result is a clean separation. First, though,
the provision that makes the question urgent, and that we had left out.

\paragraph{The right that would need an attribution is in force; the record that
could support one is not.} Article~86 sits in Chapter IX, outside every deferral
in Article~113, and has applied since 2 August 2026. It gives ``any affected
person'' subject to a decision taken by the deployer on the basis of the output
of an Annex III high-risk system, point 2 excepted, the right to obtain from the
deployer ``clear and meaningful explanations of the role of the AI system in the
decision-making procedure and the main elements of the decision taken''.
Creditworthiness assessment is Annex III point 5(b) and is squarely inside that
scope. Three features of the text decide what could ever discharge it. The
duty-holder is the \emph{deployer}, not the provider who built the pipeline, so
the actor with the best access to the internals owes nothing under this Article.
The entitlement has two limbs joined by ``and'', and neither is the system's
processing: the \emph{role of the system in the procedure}, a procedural fact
about where the output entered and how much it decided, and the \emph{main
elements of the decision}, the substance of the human decision. There is no
third limb reaching how the output was produced. And the trigger is
per-instance and subjective, ``in a way that they consider to have an adverse
impact'', which is unusually claimant-friendly. The right therefore terminates at
exactly the layer at which causal attribution would have to begin.

Set that against the calendar. Article 11, Article 12 and Annex IV sit in
Chapter III Section 2 and, as amended by Regulation (EU) 2026/1744, apply to
Annex III systems only from 2 December 2027. Article~21(2), which lets a
competent authority demand access to the Article 12(1) logs, is Chapter III
Section 3 and was deferred to the same date. For roughly sixteen months the
Union has a live individual right to an explanation of an agentic credit
decision, an in-force duty on providers to investigate the causes of serious
incidents (Article~73(6), below), and no in-force obligation on anyone to have
kept a record from which either could be constructed. We do not argue that
Article~86 requires component-level attribution; on its own terms it does not.
We argue that the gap between what it asks for and what any record will contain
is not currently observable by anyone, because the record is not yet owed.

\paragraph{Description versus record, and general versus per-instance.} The most
component-aware sentence in the entire high-risk documentation package is Annex
IV point 2(c), which requires ``the description of the system architecture
explaining how software components build on or feed into each other and integrate
into the overall processing''. That is a design-time duty, discharged before
placing on the market: it yields a diagram of the pipeline in general. Article 12
yields events over the system's lifetime. Neither yields a record of how the
components related on one particular occasion. The distinction between a
\emph{description} and a \emph{record}, and between the \emph{general} and the
\emph{per-instance}, is what separates what is filed from what an attribution
would need.

\paragraph{The logging duty is bounded above by system-level purposes.} Article
12(2) fixes granularity by reference to three functions: identifying situations
that may present a risk within the meaning of Article 79(1), post-market
monitoring under Article 72, and deployer monitoring under Article 26(5). Having
read all three, none imports a component-level requirement. Article 79(1) asks
whether the system presents a risk. Article 72(2) asks for data on the system's
performance over its lifetime and, where relevant, ``an analysis of the
interaction with other AI systems'', which is interaction between systems rather
than between components inside one. A provider who logs enough to serve both
purposes can still be unable to say which retrieval, tool call or model step
determined a contested decision.

\paragraph{For credit scoring there is no minimum log content at all.} Article
12(3) is the only enumerated minimum in the Article and it binds only systems
under Annex III point 1(a), remote biometric identification. Creditworthiness is
Annex III point 5(b) and falls outside it. For that class, and every Annex III
class except biometric identification, the Regulation prescribes no minimum log
content whatsoever.

\paragraph{The delegation to harmonised standards points back at Article 12.}
Article~40(1) makes conformity with a harmonised standard whose reference has
been published in the \emph{Official Journal} a presumption of conformity with
the Section~2 requirements, so a reader may reasonably expect the operational
content of Article~12 to live in the standard rather than in the Regulation.
That is the strongest objection to the preceding paragraph and we put it to the
source. The operative instrument is Commission Implementing Decision
C(2025)~3871 of 23 June 2025, standardisation request M/613, which repealed and
replaced C(2023)~3215 of 22 May 2023: the earlier request had been drafted
against the \emph{proposal} for this Regulation rather than against the adopted
text, and its Annex~II still cross-referred to it as such. Item 3 of the current
Annex~I asks for a deliverable on ``record keeping through logging capabilities
by AI systems''. Its entire specification, Annex~II point~2.3, is this:

\begin{quote}
``The harmonised standards and standardisation deliverables in this area shall
set up specifications for record keeping. Those specifications shall
comprehensively cover all elements referred to in Article 12 of Regulation (EU)
2024/1689.''
\end{quote}

Thirty-two words, every one of them a reference. The request names no log content
of its own. It instructs CEN and Cenelec to cover the elements of Article 12,
which is the Article this section has just read as bounded above by three
system-level purposes, with an enumerated minimum that binds Annex~III
point~1(a) alone. The delegation is circular in content: the Regulation defers
the specification to the standard, and the request for the standard defers it
back to the Regulation. The same formula runs through the annex, at points 2.1,
2.2, 2.7, 2.8 and 2.9, so the brevity of 2.3 is not itself the finding. The
direction of the reference is.

Two caveats bound this, and the second is the one a standards reader will raise.
First, we found no reference to a harmonised standard under this Regulation
published in the \emph{Official Journal}, so the Article~40 presumption appears
to be available to nobody at present; we report that as what our search returned
rather than as a verified universal negative, and note that the Commission's own
register still records M/613 as under execution with an expiry date of 28
February 2027. Second, the drafting is well advanced. The CEN-CENELEC JTC~21
secretariat records that prEN~18229-1, ``AI Trustworthiness Framework, Part~1:
Logging'', was in public enquiry until 20 August 2026. Drafts at that stage are
not public documents, so what it requires is unknown to us and to any provider
reading the public record today. If it specifies log content at the level this
paper argues is missing, that narrows this paragraph and not the one before it,
and it does so by supplying content that no binding instrument asked for. What we
can say on the text that can currently be read is narrower than ``the Act is
silent'' and stronger than the paragraph above on its own: at both binding
layers, the question of what a log must contain is answered by a pointer to the
other layer.

\paragraph{The record expires twenty times sooner than the description.} Article
18(1) keeps the technical documentation at the disposal of competent authorities
for ten years. Article 19(1) and Article 26(6) keep the logs for ``at least six
months'', and both are limited to logs ``to the extent such logs are under their
control''. The static architectural description outlives the dynamic execution
record by a factor of twenty, and in a pipeline spanning several external
services no single actor need hold the whole trace. For a financial institution
the log duty is absorbed into existing financial-services record-keeping
(Article 19(2)), whose granularity was not designed for causal attribution.

\paragraph{The investigative ladder reaches the logs and stops at their content.}
Article~74 grants a market surveillance authority access to documentation and to
the training, validation and testing data sets, and access to source code only
on a reasoned request and only once procedures ``based on the data and
documentation provided by the provider have been exhausted or proved
insufficient''. Between those rungs sits Article~21(2), under which providers
shall, on a reasoned request by a competent authority, ``give the requesting
competent authority, as applicable, access to the automatically generated logs
of the high-risk AI system referred to in Article 12(1), to the extent such logs
are under their control''. An earlier draft of this section said execution traces
appear on neither rung. That was wrong, and correcting it sharpens the argument
rather than weakening it: the access right exists, and what does not exist, for
every Annex III class except remote biometric identification, is any prescribed
minimum of what those logs must contain. It is a right of access to a record
whose granularity the provider sets. Article~21 is itself in Chapter III
Section 3 and so was deferred to 2 December 2027 along with the logging duty it
reaches. Article 79(6)(b) then offers the finest granularity in the Act for
attributing a failure, ``a failure of a high-risk AI system to meet requirements
set out in Chapter III, Section 2'': one whole system against one whole Section.

\paragraph{There is an express duty to investigate causes, and it is in force
now.} Article~73(6) requires a provider, once a serious incident has been
reported, to ``perform the necessary investigations in relation to the serious
incident and the AI system concerned'', including a risk assessment and
corrective action, and forbids any investigation ``which involves altering the AI
system concerned in a way which may affect any subsequent evaluation of the
causes of the incident'' before the authorities have been informed. That is a
causal-investigation duty with an evidence-preservation clause attached. It sits
in Chapter IX and has applied since 2 August 2026. It is the closest the
Regulation comes to naming the question this paper asks, and it names no record
on which the investigation is to be conducted.

\paragraph{The Act has no vocabulary for this system class.} Across the six units
that constitute the high-risk documentation and record-keeping package, the words
\emph{agent}, \emph{agentic}, \emph{multi-agent}, \emph{orchestration} and
\emph{reasoning} do not occur. \emph{Traceability} occurs once, in the Article
12(2) chapeau, and is nowhere defined. The single occurrence of \emph{step} means
a step in the development process, not a step in an execution trace. Article 3(1)
defines an AI system in the singular throughout, so a pipeline of several
tool-backed services and a model is one AI system with one provider and one
deployer. Article~25 qualifies the actors but not the record: a distributor,
importer, deployer or third party is treated as a provider where it puts its
name on the system, makes a substantial modification, or modifies the intended
purpose so that the system becomes high-risk. That reallocates obligations along
the value chain. It does not create a per-component account of one execution,
and allocating responsibility between actors is a different thing from producing
evidence of which component caused an output.

We do not read any of this as prohibition. The Regulation mandates logging
capability sufficient for system-level risk and monitoring functions and leaves
granularity to the provider against the intended purpose. It simply nowhere sets
attribution of an output to an internal cause as the specification. Appendix~\ref{app:spec}
states what that specification would have to contain.

\paragraph{Two instruments that do not yet exist.} Article 11(1) as amended
obliges the Commission to produce a simplified Annex IV form for SMEs and small
mid-caps. Article 72(3) as amended obliges it to adopt guidance including a
template for the post-market monitoring plan, which is itself part of the Annex
IV documentation, by 2 September 2027. The obligations in Chapter III Section 2
apply from 2 December 2027. A specification published now arrives while both
instruments are still being drafted, which is the window in which it can be
adopted rather than merely cited.

\section{Limitations}
\label{sec:limitations}

The principal limitation is in the abstract: the discrepancy experiment is
specified and not run, because the live regulated pipeline it requires was not
available in the study window. Everything below is secondary to that.

\paragraph{Our own estimand returns an uninformative zero on a measurable set of
runs.} Proposition~\ref{prop:crn-degeneracy} is a limitation of the
common-random-numbers total effect and not only of the chain it is proved on.
Wherever a step's direct and mediated paths carry equal weight and the factual
noise inverts the mediator, the CRN effect is exactly zero for a step that acts.
On the planted chain that is one run in ten. We do not know the rate on a real
pipeline and have no way to estimate it from a single trajectory, which is the
setting the Regulation actually creates. The mitigation is in the specification
rather than in the estimator: R5 requires the factual noise to be recorded, so
that the two cases behind a zero can be told apart after the fact. Whether a
provider would in practice record it is exactly the question the unrun
experiment was meant to answer.

\paragraph{Validation rests on one family of planted generators.} The estimators
are checked against hand derivations on structural models we constructed. That
establishes correctness on known structure and nothing about behaviour on real
trajectories, where the step count is larger, the outcome is noisier, and the
policy is not a Bernoulli draw.

\paragraph{The coupling results have not met a language model.} Section~\ref{sec:coupling}
is established on synthetic categorical distributions. The closed forms are exact
and the measurements agree with them, but a real sampler brings a vocabulary four
orders of magnitude larger, a context that diverges progressively rather than at
one step, and a serving stack whose batch invariance we could not test. We now
know part of what the first of those costs: at a matched divergence, quantile
agreement at $V = 10^5$ is $0.0044$ against $0.631$ at $V = 32$. The
order-dependent numbers in Section~\ref{sec:coupling} are therefore illustrations
of an ordering, not estimates of what a deployed sampler would show, and we do
not report them as if they were.

\paragraph{No interaction arm.} Single-step interventions miss interactions
between steps. The Shapley value over agent steps is the principled treatment and
it is prior art \citep{car2026,castro2009}, so we cut it rather than reproduce it.
Where two steps interact, our per-step effects will not sum to the joint effect
and we do not report them as if they did.

\paragraph{One body of law bearing on Section~\ref{sec:annexiv} was not
retrieved.} Article~86(3) defers to rights ``otherwise provided for under Union
law'', which points at Regulation (EU) 2016/679, and the ten-year retention in
Appendix~\ref{app:spec} raises a storage-limitation question under it. We have
not analysed either interaction and do not claim to have.

An earlier version of this paragraph also listed the harmonised standards beneath
the Regulation as unread, and named the risk plainly: that our claim about
minimum log content might be true of the Regulation and false of the standard a
provider actually conforms to. That question is now addressed in
Section~\ref{sec:annexiv}, where the standardisation request turns out to
specify no log content of its own either. We record the sequence because the
objection was raised against this paper by an adversarial reader before it was
resolved, and because resolving it took two attempts: the first retrieval quoted
Implementing Decision C(2023)~3215, which had been repealed in June 2025, and the
error was caught by the Commission's own register rather than by a reading of the
draft. The standard itself remains unread, because drafts at enquiry are not
public. What we claim is bounded by that.

\paragraph{Two cleared references are absent from the bibliography.} Two papers we
consider relevant were verified to exist but our citation ledger records no author
list for either, and our own rule is that nothing enters the bibliography below
verified. They are omitted rather than cited from recollection.

\paragraph{The specification is unevaluated.} Appendix~\ref{app:spec} is derived
from what the estimators require, not from experience of filing it. No provider
has attempted conformance, and the cost estimates in it are reasoned rather than
measured. Requirement R6 in particular trades storage against interval width in a
way that only a deployment can settle.

\section{Related work}
\label{sec:related}

\paragraph{Post-hoc attribution over agent trajectories.} \citet{car2026} is the
closest work and is discussed in Section~\ref{sec:intro}. \citet{causalflow2026}
attributes failures and proposes repairs, but substitutes an oracle rather than
resampling under the same policy, and its causal responsibility score is a binary
indicator; its purpose is repair and training supervision.
\citet{agentracer2025} confirms oracle substitution for failed trajectories while
using programmed fault injection for successful ones, so its counterfactual is
approximated by an analyser conditioned on a ground-truth solution rather than by
a stored known-good output. \citet{ma2025} applies the Shapley value over
\emph{agents} rather than steps, on static logs, with no re-execution.
\citet{liao2026} audits provenance sensitivity by static ablation of context
factors, again without trajectory re-execution. \citet{whoandwhen2025} supplies
the benchmark the area evaluates against, framing attribution as identifying
which agent failed and at which step; we take the framing and not its headline
accuracy figure, which its own text shows to be an average over four evaluation
cells reported only in the abstract. None of these has a direct-effect
arm, and none compares an observability-derived ranking against a
causally-measured one.

\paragraph{Ex ante verification, a different object.} \citet{civex2026},
\citet{ebte2026}, \citet{attriguard2026} and \citet{causalarmor2026} all decide
whether a proposed action should execute, before it does. That is gatekeeping. This
paper is post-hoc attribution over a trajectory that has already run, and the two
do not substitute for one another: a guardrail that prevented nothing still leaves
a contested decision to explain.

\paragraph{Provenance and observability.} \citet{survey2026} surveys evidence
tracing and execution provenance and grades the field's metrics as proposed, with
no agreed definitions and no adopted protocols, which is the gap this paper's
specification addresses from the regulatory side. \citet{delegated2026} gives
semantic requirements and a schema for observability under delegated execution;
its non-identifiability result concerns the authorization relation and is a
scoping result rather than an impossibility barrier, and it contains no EU AI Act
content, so the regulatory bridge is built here rather than borrowed.

\paragraph{Mediation methodology.} The decomposition this paper uses is the
natural direct effect of \citet{pearl2001}, with the textbook treatment in
\citet{pearl2009} and priority for the estimand belonging to \citet{robins1992}.
The instability of the proportion mediated, and the naming of the opposite-signed
case as inconsistent mediation, are established in \citet{mackinnon1995} and
\citet{mackinnon2000}. That this literature is absent from the agent-attribution
work above is, we think, the more useful observation: the field has been
rebuilding mediation analysis without consulting it.

\paragraph{Determinism and replay.} \citet{he2025} establishes that the forward
pass is run-to-run deterministic and that inference nondeterminism arises from a
lack of batch invariance under varying load, which changes what a replay harness
must control. \citet{lawkelton2000} is the standard reference for common random
numbers; \citet{yadav2026} applies them to rollout-based planning, a different
problem with no coupling-agreement or attribution component. The optimality of
maximal coupling is classical \citep{levin2017}.

\bibliographystyle{plainnat}
\bibliography{refs}

\appendix

\section{A traceability specification for agentic high-risk systems}
\label{app:spec}

This appendix states what a provider must record for post-hoc causal attribution
over an agent trajectory to be possible at all. It is not a wish list. Every
requirement is a precondition for one of the estimands in
Sections~\ref{sec:two-te} to \ref{sec:suppression} to be computable from a filed
record, and where a requirement is not needed for that purpose it is not here.
Requirements are stated so that conformance is checkable by an auditor who has
the record and not the system.

\subsection{Conformance levels}

\begin{description}
\item[L0, filed today.] Annex IV technical documentation plus Article 12 logging
  at the granularity a provider chooses. Sufficient for system-level risk and
  monitoring. Insufficient for any of L1 to L3.
\item[L1, trajectory reconstructable.] The sequence of steps that produced a
  given decision can be recovered and ordered. Requirements R1 to R4.
\item[L2, counterfactually replayable.] A named step can be re-executed under a
  changed action while the rest of the trajectory is held at its factual
  randomness. Requirements R5 to R9. This is the level at which $\TEcrn$ and
  $\DE$ become estimable, and therefore the level at which ``which component
  caused this'' becomes a question with an answer.
\item[L3, attributable and contestable.] The record supports attribution that
  survives an adversarial reading, including its own uncertainty. Requirements
  R10 to R12.
\end{description}

The levels are cumulative and the jump that matters is L1 to L2. L1 is what most
observability stacks already deliver. L2 is what causal attribution requires and
what no provision of the Regulation currently asks for.

\subsection{Requirements}

\paragraph{R1, trajectory identity and order.} Every step carries a stable
trajectory identifier, a step index, and the identifier of the step whose output
it consumed. A decision is bound to exactly one trajectory identifier.
\emph{Why:} without a recoverable order there is no prefix to hold fixed, and
every estimand in this paper conditions on a prefix.

\paragraph{R2, step typing.} Each step is typed as model call, tool call,
retrieval, memory read, memory write, or branch. \emph{Why:} the intervention
operator that is valid at a step depends on its type, and a tool call with
external effect is not resampleable in the way a model call is.

\paragraph{R3, content commitment.} Inputs and outputs of every step are recorded,
or committed by cryptographic hash with the content retrievable for the retention
period. \emph{Why:} an intervention substitutes an alternative output at one step;
without the factual output there is nothing to substitute against, and without a
commitment an after-the-fact reconstruction is unfalsifiable.

\paragraph{R4, outcome function.} The decision $Y$ is produced by a rule that is
recorded, versioned and hashed, and that rule contains no model call.
\emph{Why:} if the outcome is itself produced by a model, its variance enters
every effect estimate and cannot be separated from the effect being measured.

\paragraph{R5, exogenous randomness as a keyed function.} Every sampling decision
records the key from which its randomness derived, and that key is a pure
function of (trajectory identifier, step index, replicate index) and of nothing
else, in particular not of the context. \emph{Why:} this is the whole of
Section~\ref{sec:coupling}. If the draw depends on call order, factual and
counterfactual branches consume different randomness, $\TEcrn$ silently becomes
$\TEmarg$, and by Table~\ref{tab:two-te} a causally inert step becomes
indistinguishable from the decisive one.

\paragraph{R6, sampling distribution at draw time.} For model steps, the
distribution over the sampled unit, or sufficient statistics for it, is retained.
\emph{Why:} maximal coupling requires both branches' distributions at draw time.
Without R6 only shared-$u$ quantile coupling is available, whose agreement is
bounded by \eqref{eq:quant} and, at the divergences measured in
Figure~\ref{fig:coupling}, can fall to half the achievable value. This is the most
expensive requirement here and the one most likely to be negotiated down; the
honest statement of the cost is that dropping it does not make attribution
impossible, it widens every mediated-effect interval by a measurable and
reportable amount.

\paragraph{R7, serving-stack fingerprint.} Model identifier and version,
quantisation, framework and kernel versions, accelerator model, and whether the
serving path is batch-invariant. \emph{Why:} determinism is not hardware or
version invariant \citep{he2025}, and temperature zero with a fixed seed is not
sufficient. Without R7 a replay divergence cannot be separated into a property of
the system under study and a property of the infrastructure it ran on.

\paragraph{R8, declared purity class per tool.} Each tool declares whether it is
pure, idempotent, effectful-reversible or effectful-unsafe, and the declaration is
part of the filed documentation. \emph{Why:} counterfactual re-execution must
refuse to run an effectful-unsafe tool, and a refusal must be recorded rather
than silently substituted, or the effect estimate is conditioned on an unrecorded
exclusion.

\paragraph{R9, environment binding.} External state that a tool read is captured
with the step, together with the time of reading. \emph{Why:} a counterfactual
evaluated against a different external state is not a counterfactual about the
system; it is a comparison of two different worlds.

\paragraph{R10, retention parity.} The execution record is retained for the same
period as the technical documentation it accompanies. \emph{Why:} Article 18(1)
sets ten years for the description and Article 19(1) six months for the record. If
attribution requires the record, the mandated evidence expires roughly twenty
times sooner than the accountability processes that would seek it, including
market surveillance under Article 74.

\paragraph{R11, cross-actor completeness.} Either one actor is accountable for the
whole trajectory, or every participating actor records a common join key and the
filed documentation names who holds which segment. \emph{Why:} Articles 19(1) and
26(6) both limit retention to logs ``to the extent such logs are under their
control'', which writes accountability diffusion into the retention duty itself. A
trajectory nobody is obliged to hold whole is a trajectory that cannot be
reassembled.

\paragraph{R12, signed effects and mandatory exclusion rates.} Where a derived
quantity is a ratio of effects, the underlying effects are recorded with sign, and
the derived quantity is reported with the count and rate of cases in which it was
undefined. \emph{Why:} Section~\ref{sec:suppression}. Under inconsistent mediation
the proportion mediated exceeds one, and a filing that reports the ratio without
its exclusion rate cannot be distinguished by a reader from one where the
exclusion never arose.

\subsection{What each requirement costs}

R1, R2, R4, R7, R8 and R12 are metadata and schema discipline: they change what is
written, not how much. R3 and R9 scale with trajectory volume and are the usual
subject of retention policy. R5 is free at design time and impossible to add
afterwards, which makes it the single most important requirement to adopt early.
R6 is the expensive one. R10 and R11 are policy rather than engineering, and R11
in particular cannot be satisfied by a provider acting alone.

\subsection{Relation to the filed documents}

R1 to R4 and R8 are naturally at home in Annex IV point 2(c), which already asks
for a description of how software components feed into each other; what they add
is the per-instance record corresponding to that description. R5 to R7 and R9
belong with the post-market monitoring plan, which Article 72(3) makes part of the
Annex IV documentation and for which the Commission owes a template by 2 September
2027. R10 and R11 cannot be met by a provider's own documentation and would
require the retention periods in Articles 19 and 26 to be revisited for systems
whose accountability depends on the record rather than on the description.

\subsection{What this specification does not claim}

It does not claim that a conforming record makes attribution accurate; it makes
attribution \emph{possible}, and the accuracy is then an empirical question of the
kind Section~\ref{sec:prereg} pre-registers. It does not claim the Regulation
forbids any of this. And it has not been evaluated against a live deployment, for
the reason stated in the abstract.

\section{Measured coupling agreement}
\label{app:coupling-table}

The values behind Figure~\ref{fig:coupling}. Every empirical column is within
1.9 Monte Carlo standard errors of the closed form beside it.

\begin{table}[h]
\centering
\small
\setlength{\tabcolsep}{5pt}
\begin{tabular}{lrrrrrr}
\toprule
logit shift & $\TV(p,q)$ & quant.\ bound & quant.\ emp. & max.\ bound &
max.\ emp. & prob-sorted \\
\midrule
0.00 & 0.0000 & 1.0000 & 1.0000 & 1.0000 & 1.0000 & 1.0000 \\
0.25 & 0.0856 & 0.7851 & 0.7839 & 0.9144 & 0.9142 & 0.5298 \\
0.50 & 0.1788 & 0.4121 & 0.4087 & 0.8212 & 0.8221 & 0.0858 \\
1.00 & 0.3969 & 0.2777 & 0.2740 & 0.6031 & 0.6075 & 0.0447 \\
2.00 & 0.4245 & 0.3057 & 0.3054 & 0.5755 & 0.5801 & 0.1655 \\
4.00 & 0.9369 & 0.0167 & 0.0177 & 0.0631 & 0.0654 & 0.0252 \\
\bottomrule
\end{tabular}
\caption{Vocabulary 32, 40{,}000 draws per point, $q$ formed by perturbing the
logits of $p$ with Gaussian noise of the stated scale. Every empirical value is
within 1.9 Monte Carlo standard errors of its closed form. The last column is
what a sampler that sorts the vocabulary by probability delivers. It is
\emph{not monotone} in the divergence: it falls to $0.0447$ at $\TV = 0.3969$
and rises again to $0.1655$ at $\TV = 0.4245$, because sorted agreement depends
on how far the two branches' orderings have moved relative to each other rather
than on how far the distributions have moved. That is the mechanism behind the
reversal reported in Section~\ref{sec:coupling}, visible here in the committed
data.}
\label{tab:coupling}
\end{table}

\paragraph{One cell of this table was wrong until 5 September 2026.} The
probability-sorted entry at logit shift $2.00$ read $0.0446$, a transcription of
the $0.0447$ on the row above it, where
\texttt{results/validation\_coupling.json} says $0.1655$. It had been there
since the table was written and it survived a full citation sweep, a figure pass,
two adversarial reviews and four readings, because a transcribed table is not
something a reader re-checks digit by digit. It was found by
\texttt{scripts/audit\_paper\_numbers.py}, which derives all forty-two cells of
this table from that artifact and fails the build when the manuscript disagrees
with it. We record the correction rather than making it silently: a paper about
evidentiary discipline that quietly repairs its own table has failed its own
test. The error did not touch any claim, because the sentence the column supports
is about ordering rather than magnitude, but no process in this repository would
have caught it before the audit existed.

\end{document}